\documentclass[runningheads]{llncs}
\usepackage[T1]{fontenc}
\usepackage{amsmath}
\usepackage{amsfonts}
\usepackage{wrapfig}
\usepackage{cite}
\usepackage{graphicx}
\usepackage{booktabs}
\usepackage[colorlinks=true,
    linkcolor=blue,
    citecolor=blue,
    urlcolor=blue]{hyperref}
\usepackage{color}

\begin{document}
\title{Differentiable Zero-One Loss via Hypersimplex Projections}
%
%
\author{
Camilo Gomez\inst{1}\thanks{Corresponding author}
\and
Pengyang Wang\inst{2} 
\and 
Liansheng Tang\inst{1}
}

\authorrunning{C. Gomez et al.}


\institute{
School of Data, Mathematical, and Statistical Sciences, University of Central Florida, Orlando, USA \\
\email{camilo.gomez@ucf.edu, liansheng.tang@ucf.edu}
\and
Department of CIS, University of Macau, Macao, China \\
\email{pywang@um.edu.mo}
}

\maketitle              
\begin{abstract}
Recent advances in machine learning have emphasized the integration of structured optimization components into end-to-end differentiable models, enabling richer inductive biases and tighter alignment with task-specific objectives. In this work, we introduce a novel differentiable approximation to the zero–one loss—long considered the gold standard for classification performance, yet incompatible with gradient-based optimization due to its non-differentiability. Our method constructs a smooth, order-preserving projection onto the $(n,k)$-dimensional hypersimplex through a constrained optimization framework, leading to a new operator we term Soft-Binary-Argmax. After deriving its mathematical properties, we show how its Jacobian can be efficiently computed and integrated into binary and multiclass learning systems. Empirically, our approach achieves significant improvements in generalization under large-batch training by imposing geometric consistency constraints on the output logits, thereby narrowing the performance gap traditionally observed in large-batch training. Our code is available here 
\url{https://github.com/camilog04/Differentiable-Zero-One-Loss-via-Hypersimplex-Projections}.

\keywords{Differentiable Optimization in Deep Learning \and Differentiable programming \and Large-batch generalization.}
\end{abstract}
\section{Introduction}
Recent developments in machine learning have demonstrated that optimization procedures can be used as fundamental components within end-to-end differentiable systems \cite{DCL, fast_soft_ranking, Differentiating_Parameterized, LTR}. 
Rather than relying solely on traditional neural network layers, these approaches incorporate more structured, often nontrivial computations—such as constrained optimization—directly into the learning pipeline. These components usually have structural or computational properties that have been proven useful in downstream tasks. For instance, Sparsemax, a differentiable projection onto the simplex, produces sparse posterior distributions that are effective as attention mechanisms \cite{sparsemax}. Similarly, Csoftmax, a projection onto the budget polytope, has demonstrated utility in sequence tagging \cite{martins-kreutzer-2017-learning}. This reflects a growing shift toward viewing learning systems as differentiable computational frameworks that blend elements of traditional statistical modeling with algorithmic computation \cite{blondel2024elementsdifferentiableprogramming}. 
In this paper, we focus on crafting a differentiable, order-preserving projection into the $n,k$-dimensional hypersimplex\cite{deloera1995hypersimplex}--a well-studied combinatorics polytope--in composition with a squared loss to generate a close approximation to the zero-one, misclassification loss compatible with modern large-scale differentiable systems.

From a theoretical perspective in machine learning, the goal is to minimize the expected value of a task-specific loss function over a data distribution. For classification, the most natural choice is the zero-one loss, which directly measures misclassification error. However, zero-one loss is non-differentiable and discontinuous, as it depends on a hard threshold decision—yielding gradients of zero almost everywhere and rendering it incompatible with gradient-based optimization. To enable tractable training, modern approaches rely on surrogate losses (e.g., cross-entropy, hinge loss) that are smooth and differentiable \cite{bartlett2006convexity}. These surrogates serve as proxies that approximate the zero-one loss while facilitating efficient optimization. Despite their practicality, such surrogates often exhibit a mismatch with the true evaluation metric, especially under large-batch regimes. This degradation in performance with large batch sizes is the so-called generalization gap \cite{oyedotun2022newperspectiveunderstandinggeneralization}, leading to growing interest in tighter, more faithful approximations to the zero-one loss, under the hypothesis that closer surrogates yield better generalization.


This work introduces a fully differentiable approximation to the zero–one
loss, featuring an efficient forward pass with complexity $\mathcal{O}(n \log n)$ and a
backward pass with $\mathcal{O}(n)$ complexity. This is achieved through our novel
differentiable projection layer, \textbf{Soft-Binary-Argmax@k}. Rather than treating
the output scores as independent, our layer explicitly enforces that the largest $k$
logits correspond to the predicted positive classes. This design ensures that small
perturbations in the input produce coherent, structurally consistent adjustments in
the output, making the Jacobian of the transformation inherently \emph{positionally
aware} with respect to the most confident predictions. The benefits of this approach
are twofold: it allows binary classifiers to express multiple positive outcomes within
a single forward pass, and it extends naturally to the multiclass setting by applying
the same projection principle across one-hot encoded class dimensions. Importantly,
the geometric constraints imposed on the output logits act as a form of regularization
that mitigates the generalization degradation typically observed under large-batch
training, enabling stable optimization and improved predictive performance.

More precisely, our contributions and novelty can be summarized as follows:

\begin{enumerate}
    \item We introduce a differentiable projection layer—a smooth thresholding operator realized via projection onto the interior of the $n,k$-dimensional hypersimplex—termed \textbf{Soft-Binary-Argmax@k}. It provides a differentiable relaxation of the binary \textit{argmax}, reduces to isotonic regression, and enables efficient forward and backward computation on both CPU and GPU.

    \item We propose a smooth, almost-everywhere differentiable loss function for binary classification, the \textbf{HyperSimplex Loss}, and derive its mathematical properties. The loss couples the mean squared error with our projection layer and extends naturally to multiclass.

    \item Through rigorous experimentation, we provide empirical evidence that the proposed loss mitigates the generalization gap and improves performance across multiple classification benchmark datasets.
\end{enumerate}

\section{Related work}
\subsection{Differentiable optimization-based ML}
Optimization-based modeling integrates structure and constraints into machine learning architectures by embedding parameterized \textit{argmin}/\textit{argmax} operations as differentiable layers. Such layers are often formulated as convex, constrained programs, with differentiability achieved via the implicit function theorem applied to the KKT conditions~\cite{Differentiating_Parameterized}. Parallel work has explored differentiable optimization for order-constrained or monotonic outputs, including differentiable isotonic regression operators for smooth, order-aware learning~\cite{ fast_soft_ranking}. To the best of our knowledge, no prior work has introduced a differentiable Euclidean projection onto the $(n,k)$-dimensional hypersimplex—computed via the Pool Adjacent Violators (PAV) algorithm—as a learnable layer. Our approach fills this gap, providing an efficient and theoretically grounded formulation for integrating hypersimplex projections into modern differentiable systems.

\subsection{Generalization gap}
A well-known challenge in modern deep learning is the \emph{generalization gap}, 
where models trained with large batch sizes achieve low training loss but 
exhibit degraded test performance. This phenomenon has been widely observed in neural networks, 
as large batches tend to converge to sharp minima that generalize poorly compared 
to the flatter solutions found by small-batch training~\cite{keskar2016large}. 
Subsequent work has explored remedies such as adaptive learning rate schedules and warmup 
strategies, noise injection and regularization~\cite{hoffer2017train}, 
and stochastic weight averaging~\cite{izmailov2018averaging} to mitigate this effect. However, to the best of our knowledge, our work is the first to address the generalization gap 
\emph{through loss function design}, introducing a principled framework that directly 
links the geometry of the loss landscape to generalization behavior.
\section{Preliminaries}

In supervised multiclass classification, we are given a dataset 
\(\mathcal{D} = \{(x_i, y_i)\}_{i=1}^n\), where each input 
\(x_i \in \mathcal{X} \subset \mathbb{R}^d\) is associated with a categorical label 
\(y_i \in \{1, \dots, C\}\) among \(C\) possible classes.  
Let \(f: \mathcal{X} \to \mathbb{R}^C\) denote a prediction function producing a score vector 
\(f(x_i) = (f_1(x_i), \dots, f_C(x_i))^\top\), where each component \(f_c(x_i)\) reflects the model’s confidence for class \(c\).

The learning objective is to minimize the \emph{multiclass zero–one loss}, which measures the fraction of misclassified samples:
\begin{equation}
\mathcal{L}_{0/1}(f)
= \frac{1}{n} \sum_{i=1}^n 
\mathbb{I}\!\left[\, \hat{y}_i \neq y_i \,\right],
\qquad
\hat{y}_i = \arg\max_{c \in \{1,\dots,C\}} f_c(x_i).
\end{equation}
While $\mathcal{L}_{0/1}$ directly quantifies classification accuracy, it is discontinuous and non-differentiable, making it unsuitable for gradient-based optimization.

To obtain a differentiable surrogate, convex losses are commonly employed.  
Common loss functions in machine learning—such as squared loss, hinge loss, and logistic loss—are convex approximations of the true 0–1 misclassification loss~\cite{bartlett2006convexity}.  
Among these, the squared loss provides the closest approximation to the 0–1 loss on the interval $(0,1)$, making it a natural foundation for our formulation.  
The \emph{multiclass squared loss} penalizes deviations between predicted scores and the corresponding one-hot target encodings, summing over all classes:
\begin{equation}
\mathcal{L}_{\text{sq}}(f)
= \frac{1}{n} \sum_{i=1}^n \sum_{c=1}^C 
\big(f_c(x_i) - \mathbb{I}[y_i = c]\big)^2.
\end{equation}
Equivalently, in matrix form,
\(
\mathcal{L}_{\text{sq}}(f)
= \frac{1}{n} \| F(X) - Y \|_F^2,
\)
where \(F(X) \in \mathbb{R}^{n \times C}\) collects the model outputs and \(Y\) is the one-hot label matrix.  
While this smooth, convex loss provides analytic gradients and serves as a tractable approximation to $\mathcal{L}_{0/1}$, it also suffers from a major drawback: it imposes a quadratic penalty on extreme predicted values, leading to sensitivity to outliers~\cite{huber1964robust,strict_scoring}.  
This limitation motivates our projection-based formulation introduced next, which preserves smoothness while constraining outputs within a geometrically consistent region.

\section{Methodology}
\subsection{Overview}
This work begins by formulating the problem in the binary classification setting, where the objective is to distinguish between positive and negative outcomes. The same geometric principles, however, extend naturally to the multiclass setting, as shown in later sections. Common loss functions in machine learning—such as squared loss, hinge loss, and logistic loss—are convex surrogates of the true 0–1 misclassification loss~\cite{bartlett2006convexity}. However, the true 0–1 loss minimizer lies at one of the vertices of the $(n,k)$-dimensional hypersimplex, as it satisfies two key properties:
\begin{enumerate}
    \item Its entries are binary, i.e., each component of $f(\mathbf{X})$ takes a value in $\{0,1\}$.
    \item For any given sample $\mathbf{y}$ drawn from the distribution of $Y$, a perfect prediction vector should contain exactly $k$ positive entries, matching the number of positives in $\mathbf{y}$; that is, $\|f(\mathbf{X})\|_1 = \|\mathbf{y}\|_1 = k$.
\end{enumerate}

Motivated by the geometry of the optimal solution, we now introduce a series of relaxations that make the learning problem tractable. First, we relax the binary constraint and allow $f(\mathbf{X})$ to take real values in $\mathbb{R}^n$, while encouraging sparsity—pushing predictions as close as possible to $0$ or $1$. To balance smoothness with structural fidelity, we compose our differentiable projection operator, the \textbf{soft-binary-argmax@k}, which produces sparse and nearly binary outputs, with the squared loss, which ensures smooth optimization and stability. This composition yields a surrogate objective that remains differentiable while closely aligning with the discrete geometry of the hypersimplex.

The remainder of this section is organized as follows. We begin by establishing the connection between binary-argmax@k and thresholding. Next, we formulate binary-argmax@k as a projection onto the ${n,k}$ hypersimplex, from which we derive its continuous relaxation, soft-binary-argmax@k, and analyze its key properties. Finally, we combine the soft-binary-argmax@k with a squared loss to define the HyperSimplex loss, and demonstrate its effectiveness in generalization for large batch sizes.

\subsection{Thresholding and the Binary-Argmax@k}
This section establishes an intuitive connection between a real-valued vector 
$\mathbf{x} \in \mathbb{R}^n$ and its binary counterpart in $\{0,1\}^n$. A common discretization method is 
\textit{thresholding}, where entries exceeding a fixed boundary (typically $0.5$) are set to $1$, and the rest to $0$. 
Although simple, this approach ignores relative ordering and offers no control over the number of positive components.

A more structured alternative is the binary-argmax@k operator, denoted $r_k$, which assigns $1$ to the $k$ largest 
entries of $\mathbf{x}$ and $0$ to the remaining $n - k$. Here, the threshold is adaptively defined by the $k$-th largest value 
of $\mathbf{x}$, reducing to standard thresholding when $k = \lceil n/2 \rceil$, where the threshold equals the empirical median 
of the logits.

Formally, let $\mathbf{x} \in \mathbb{R}^n$ be a vector of scores and $k \in \{1, \dots, n\}$.
We define the binary-argmax@k operator as
\begin{equation}
r_k(\mathbf{x}) = \mathbb{I}\!\left(x_i \geq T_k(\mathbf{x})\right),
\quad 
T_k(\mathbf{x}) = \text{$k$-th largest value of } \mathbf{x}.
\end{equation}

This rule ensures exactly $k$ components of $\mathbf{x}$ are set to $1$, enforcing the constraint
\begin{equation}
\|r_k(\mathbf{x})\|_1 = k, 
\quad 
r_k(\mathbf{x}) \in \{0,1\}^n.
\end{equation}

The binary-argmax@k mapping does not provide useful derivatives, hindering gradient-based optimization. 
To address this, we formulate it as a linear optimization problem over the $(n,k)$-dimensional hypersimplex $\Delta^n_k$ 
and introduce a Euclidean regularization term with a temperature parameter, yielding a smooth relaxation—the 
soft-binary-argmax@k. This differentiable formulation preserves the hypersimplex geometry while providing 
informative gradients for end-to-end learning.

\subsection{Binary-Argmax@k: Euclidean Projections onto the Hypersimplex}

The Euclidean projection onto the $(n,k)$-dimensional hypersimplex can be expressed as the solution of a simple regularized linear program:
\begin{equation}
\label{eq:projection_lagrangian_form}
\underset{\mathbf{y} \in \mathbb{R}^n}{\mathrm{argmax}} 
\;\; \langle \mathbf{x}, \mathbf{y} \rangle - \|\mathbf{y}\|_2^2
\quad 
\text{s.t.} \quad 
\mathbf{1}^\top \mathbf{y} = k, \quad 0 \leq \mathbf{y} \leq 1.
\end{equation}
The first term encourages alignment with the input vector $\mathbf{x}$, 
while the quadratic regularization term enforces proximity to the origin, thereby inducing a 
balance between sparsity and fidelity. The affine constraint 
$\mathbf{1}^\top \mathbf{y} = k$ fixes the $\ell_1$ mass of $\mathbf{y}$, 
ensuring exactly $k$ active components, while the box constraint 
$0 \leq \mathbf{y} \leq 1$ confines the solution to the hypercube $[0,1]^n$.

The feasible region defined by these two constraints is precisely the 
$(n,k)$-dimensional hypersimplex:
\begin{equation}
\Delta^n_k = \left\{ \mathbf{y} \in [0,1]^n \; \big| \; 
\sum_{i=1}^n y_i = k \right\}.
\end{equation}
Hence, the optimization problem in~\eqref{eq:projection_lagrangian_form} 
is equivalent to the Euclidean projection over the hypersimplex:
\begin{equation}
\Pi_{\Delta^n_k}(\mathbf{x}) =
\underset{\mathbf{y} \in \Delta^n_k}{\mathrm{argmin}} 
\; \|\mathbf{x} - \mathbf{y}\|_2^2.
\end{equation}
Since $\Delta^n_k$ is convex and compact, this problem admits a unique solution. 
Geometrically, $\Pi_{\Delta^n_k}(\mathbf{x})$ corresponds to the point 
within $\Delta^n_k$ that lies closest to $\mathbf{x}$ in Euclidean distance, 
and algebraically, it coincides with the binary vector that activates 
the $k$ largest components of $\mathbf{x}$, the binary-argmax@k.

\begin{figure} 
    \centering
    \vspace{-6pt} 
    \includegraphics[width=\textwidth, height=0.35\textwidth, keepaspectratio]{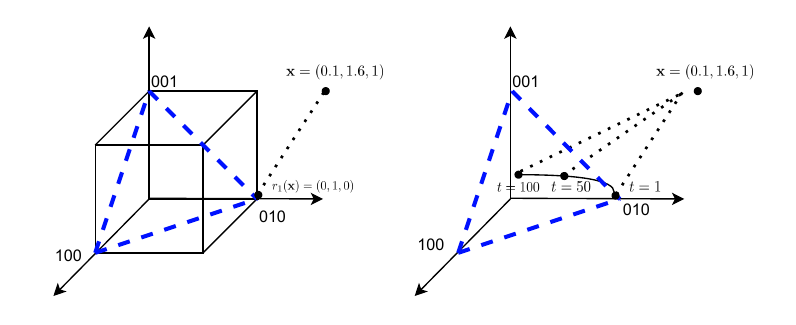}
    \caption{Binary-argmax@k of a point $\mathbf{x} = (0.1, 1.6, 1)$ into the exterior of the Hypersimplex (left). At $k=1$, the solution is $(0,1,0)$. Introducing temperature to the program yields an interior solution (right), i.e., the soft-binary-argmax@1. In $\mathbb{R}^3$ different $k$ values yield points on a standard simplex, but in higher dimensions yields a point on the hypersimplex. In $\mathbb{R}^4$ with $k=2$, the solution lies on an octahedron~\cite{amos2019limitedmultilabelprojectionlayer}.}
    \vspace{-8pt} 
\end{figure}

\subsection{Soft-Binary-Argmax@k: A Differentiable Approximation}

The projection $\Pi_{\Delta^n_k}(\mathbf{x})$ provides a geometric mapping from a continuous vector $\mathbf{x}$ to its structured binary counterpart, but it remains piecewise constant and thus non-differentiable. Small perturbations in $\mathbf{x}$ can abruptly change the identity of the top-$k$ elements, yielding discontinuities and zero gradients almost everywhere. Consequently, the hard binary-argmax@k operator is incompatible with gradient-based optimization.

\paragraph{Temperature-Scaled Relaxation.}
To obtain a smooth approximation, we introduce a temperature parameter $\tau>0$ that scales the regularization strength in the projection objective:
\begin{equation}
\label{eq:soft_projection_def}
\underset{\mathbf{y} \in \Delta^n_k}{\mathrm{argmin}} 
\left( \tau \|\mathbf{y}\|_2^2 - 2\langle \mathbf{x}, \mathbf{y} \rangle \right) =\underset{\mathbf{y} \in \Delta^n_k}{\mathrm{argmin}} 
\left( \|\mathbf{y}\|_2^2 - 2\Big\langle \tfrac{\mathbf{x}}{\tau}, \mathbf{y} \Big\rangle \right)
\end{equation}

leading to the compact expression
\begin{equation}
\label{eq:soft_projection_scaled}
\Pi_{\tau}\!\left( {\mathbf{x}} \right)
= 
\underset{\mathbf{y} \in \Delta^n_k}{\mathrm{argmin}}
\Big\| \mathbf{y} - \tfrac{\mathbf{x}}{\tau} \Big\|_2^2 = \Pi_{\Delta^n_k}\!\left( \tfrac{\mathbf{x}}{\tau} \right)
\end{equation}
As $\tau \!\to\! 0$, the operator recovers the discontinuous hard projection, while larger $\tau$ values 
yield smoother outputs closer to the hypersimplex—defining the soft-Binary-Argmax@k operator.

\begin{proposition}[Differentiability a.e]\label{prop:ae-diff}
Fix $k\in\{1,\dots,n\}$ and $\tau>0$. The mapping
\[
F_\tau:\ \mathbb{R}^n \to \Delta^n_k,\qquad 
F_\tau(\mathbf{x}) :=\Pi_{\Delta^n_k}\!\left(\tfrac{\mathbf{x}}{\tau}\right)
\]
is $(1/\tau)$-Lipschitz, hence differentiable almost everywhere (a.e.) in $\mathbb{R}^n$.
\end{proposition}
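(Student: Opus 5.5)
The plan is to split $F_\tau$ as the composition of the linear scaling $\mathbf{x}\mapsto \mathbf{x}/\tau$ with the Euclidean projection $\Pi_{\Delta^n_k}$. I will show the projection is $1$-Lipschitz and the scaling is $(1/\tau)$-Lipschitz; the composition is then $(1/\tau)$-Lipschitz, and Rademacher's theorem gives differentiability almost everywhere.

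\emph{Step 1: the convex set.} As stated in the paper, $\Delta^n_k=\{\mathbf{y}\in[0,1]^n:\mathbf{1}^\top\mathbf{y}=k\}$ is nonempty (it contains $(k/n)\mathbf{1}$), closed and convex. It is the intersection of the hypercube with an affine hyperplane. Hence $\Pi_{\Delta^n_k}$ is well defined and single-valued.

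\emph{Step 2: nonexpansiveness.} This is the only substantive step, and it is the standard projection argument. For any $\mathbf{u}$, the minimizer $\mathbf{p}=\Pi_{\Delta^n_k}(\mathbf{u})$ satisfies the variational inequality
\[
\langle \mathbf{u}-\mathbf{p},\,\mathbf{z}-\mathbf{p}\rangle\le 0 \qquad \text{for all } \mathbf{z}\in\Delta^n_k .
\]
This is the first-order optimality condition for minimizing the convex function $\tfrac12\|\mathbf{y}-\mathbf{u}\|_2^2$ over a convex set. Let $\mathbf{p}=\Pi(\mathbf{u})$ and $\mathbf{q}=\Pi(\mathbf{v})$. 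Apply the inequality at $\mathbf{u}$ with $\mathbf{z}=\mathbf{q}$, and at $\mathbf{v}$ with $\mathbf{z}=\mathbf{p}$, then add the two. This gives
\[
\|\mathbf{p}-\mathbf{q}\|_2^2\le\langle \mathbf{u}-\mathbf{v},\,\mathbf{p}-\mathbf{q}\rangle .
\]
Cauchy--Schwarz then yields $\|\mathbf{p}-\mathbf{q}\|_2\le\|\mathbf{u}-\mathbf{v}\|_2$. The case $\mathbf{p}=\mathbf{q}$ is trivial.

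\emph{Step 3: composition.} Setting $\mathbf{u}=\mathbf{x}/\tau$ and $\mathbf{v}=\mathbf{x}'/\tau$ gives
\[
\|F_\tau(\mathbf{x})-F_\tau(\mathbf{x}')\|_2\le \tfrac1\tau\|\mathbf{x}-\mathbf{x}'\|_2 .
\]

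\emph{Step 4: a.e.\ differentiability.} Each component of $F_\tau$ is a real-valued Lipschitz function on $\mathbb{R}^n$. By Rademacher's theorem, each component is differentiable outside a Lebesgue-null set. The union of these $n$ null sets is null, so $F_\tau$ is differentiable a.e.

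I do not expect a real obstacle here. The only point needing care is deriving the variational inequality. One can take it as a known fact, or prove it by expanding $\|\mathbf{u}-(\mathbf{p}+t(\mathbf{z}-\mathbf{p}))\|_2^2\ge\|\mathbf{u}-\mathbf{p}\|_2^2$ for $t\in(0,1]$ and letting $t\to0^+$.
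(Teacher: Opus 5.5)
Your proof is correct and follows the same route as the paper: nonexpansiveness of the Euclidean projection onto the closed convex set $\Delta^n_k$, composed with the $1/\tau$ scaling, then Rademacher's theorem. The only difference is that you derive nonexpansiveness from the variational inequality (and apply Rademacher componentwise), whereas the paper simply quotes nonexpansiveness as a standard fact.
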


\begin{proof}
The Euclidean projection onto a closed convex set in a Hilbert space is nonexpansive:
$\|\Pi_C(\mathbf{u})-\Pi_C(\mathbf{v})\|_2 \le \|\mathbf{u}-\mathbf{v}\|_2$ for all $\mathbf{u},\mathbf{v}$.
With $C=\Delta^n_k$ and $\mathbf{u}=\mathbf{x}/\tau$, $\mathbf{v}=\mathbf{z}/\tau$,
\[
\bigl\|F_\tau(\mathbf{x})-F_\tau(\mathbf{z})\bigr\|
= \Bigl\|\Pi_{\Delta^n_k}\!\left(\tfrac{\mathbf{x}}{\tau}\right) - \Pi_{\Delta^n_k}\!\left(\tfrac{\mathbf{z}}{\tau}\right)\Bigr\|
\le \Bigl\|\tfrac{\mathbf{x}}{\tau}-\tfrac{\mathbf{z}}{\tau}\Bigr\|
= \tfrac{1}{\tau}\,\|\mathbf{x}-\mathbf{z}\|.
\]
Thus $F_\tau$ is $(1/\tau)$-Lipschitz. By Rademacher’s theorem, every Lipschitz map on $\mathbb{R}^n$ is differentiable a.e., proving the claim.
\end{proof}

\begin{proposition}[Order preservation]\label{prop:order_preservation}
The projection solution \( y_i = \Pi_{\Delta^n_k}\!\left(\tfrac{x_i}{\tau}\right) \) is order preserving; that is, if \( x_1/\tau \ge x_2/\tau \ge \cdots \ge x_n/\tau \), then the projected coordinates satisfy \( y_1 \ge y_2 \ge \cdots \ge y_n \).
\end{proposition}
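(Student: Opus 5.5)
Set $\mathbf{u} = \mathbf{x}/\tau$ and $\mathbf{y} = \Pi_{\Delta^n_k}(\mathbf{u})$. We may assume $u_1 \ge u_2 \ge \cdots \ge u_n$. The statement is equivalent to showing that $y_i \ge y_j$ whenever $u_i \ge u_j$. I would prove this in two ways: a short exchange argument that uses only the symmetry of $\Delta^n_k$, and a KKT check that also gives the explicit form used later.

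\textbf{Exchange argument.} Suppose, for contradiction, that $u_i > u_j$ but $y_i < y_j$. Let $\mathbf{y}'$ be $\mathbf{y}$ with coordinates $i$ and $j$ swapped. The hypersimplex is invariant under coordinate permutations: the box constraints and the sum are unchanged. Hence $\mathbf{y}' \in \Delta^n_k$. Comparing objectives, all terms except the $i$th and $j$th cancel, and
\[
\|\mathbf{y}-\mathbf{u}\|_2^2 - \|\mathbf{y}'-\mathbf{u}\|_2^2 = 2(u_i - u_j)(y_j - y_i) > 0.
\]
This expansion is the only computation needed. It shows that $\mathbf{y}'$ is strictly closer to $\mathbf{u}$, contradicting the optimality of $\mathbf{y}$. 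So $u_i > u_j$ implies $y_i \ge y_j$.

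\textbf{Ties.} The main subtlety is the case $u_i = u_j$, where the weak inequality has to be upgraded to $y_i = y_j$. Here the swapped vector $\mathbf{y}'$ has the same objective value as $\mathbf{y}$. The objective is strictly convex, so its minimizer over $\Delta^n_k$ is unique (as noted before Proposition~\ref{prop:ae-diff}). Therefore $\mathbf{y}' = \mathbf{y}$, which forces $y_i = y_j$. Combining the two cases gives $y_1 \ge \cdots \ge y_n$.

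\textbf{KKT cross-check.} As a sanity check, and because it is useful for the isotonic-regression reduction, I would also verify the result through the KKT conditions. With a multiplier $\nu$ for $\mathbf{1}^\top\mathbf{y} = k$, the solution has the form $y_i = \min(1, \max(0, u_i - \nu))$. This map is a nondecreasing function of $u_i$ for fixed $\nu$, which gives monotonicity directly. I expect no real obstacle: the exchange argument is self-contained, and the only care needed is the tie case handled through uniqueness.
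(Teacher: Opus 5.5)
Your proof is correct, but your main argument differs from the paper's. The paper's proof is exactly your ``KKT cross-check'': stationarity and complementary slackness give $y_i=\mathrm{clip}(x_i/\tau-\lambda/2,0,1)$ with one multiplier shared by all coordinates, and monotonicity of the clip map finishes the job.

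Your exchange argument rests on just two facts: $\Delta^n_k$ is invariant under coordinate permutations, and the projection is unique. The identity $\|\mathbf{y}-\mathbf{u}\|_2^2-\|\mathbf{y}'-\mathbf{u}\|_2^2=2(u_i-u_j)(y_j-y_i)$ checks out. Invoking uniqueness for ties is exactly right, since the swap gives no strict improvement there.

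What your route buys is generality and self-containment. It applies verbatim to the Euclidean projection onto any permutation-invariant closed convex set, such as the simplex or the permutahedron. It also never touches constraint qualifications or the multiplier. That is a genuine convenience, because the paper's remark that $\lambda$ is ``uniquely determined'' fails when every $y_i\in\{0,1\}$. For example, with $\mathbf{u}=(5,-5)$ and $k=1$, a whole interval of multipliers works. Monotonicity holds for any valid multiplier, though, so the paper's conclusion is unaffected.

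What the KKT route buys is the explicit threshold form, which settles ties automatically. It is also what the rest of the paper relies on: the reduction to isotonic regression (or a one-dimensional search for the threshold) and the active-set Jacobian $I_{|A|}-\tfrac{1}{|A|}\mathbf{1}\mathbf{1}^\top$.
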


\begin{proof}
From the KKT conditions of the Lagrangian associated with \eqref{eq:soft_projection_scaled}, stationarity and complementarity yield, for each \(i\),
\[
y_i = \mathrm{clip}\!\left(x_i/\tau - \tfrac{\lambda}{2},\, 0,\, 1\right),
\]
where the multiplier \( \lambda \) is uniquely determined to satisfy the equality constraint \( \sum_i y_i = k \). Since the mapping \(t \mapsto \mathrm{clip}\!\left(t-\tfrac{\lambda}{2},0,1\right)\) is monotone nondecreasing in \(t\), it follows that \(x_1/\tau \ge \cdots \ge x_n/\tau \Rightarrow y_1 \ge \cdots \ge y_n\). See  \cite{gomez2025projection} for details.

\end{proof}

\begin{corollary}[Computation]
Since the projection is order preserving (Proposition~\ref{prop:order_preservation}), adding a monotonicity constraint does not change the solution. Hence, for any sorted input $\mathbf{x}/\tau$, the projection can be computed via a reduction to isotonic regression:
\[
\Pi(\mathbf{x}/\tau)
= \underset{
\substack{
\mathbf{y} \in [0,1]^n,\\
\mathbf{1}^\top \mathbf{y} = k,\\
y_1 \ge \cdots \ge y_n
}
}{\arg\min}\;
\left\| \tfrac{\mathbf{x}}{\tau} - \mathbf{y} \right\|^2.
\]
The feasible set is closed and convex, ensuring a unique and differentiable solution. This reduces to a standard isotonic projection problem, solvable efficiently via the pool-adjacent-violators (PAV) algorithm \cite{Best1990} in $\mathcal{O}(n \log n)$ time.
\end{corollary}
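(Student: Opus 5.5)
The argument has two parts. First, adding the chain constraint $y_1 \ge \cdots \ge y_n$ leaves the minimizer unchanged. Second, the constrained problem can be solved by PAV in $\mathcal{O}(n\log n)$. Throughout, write $\mathbf{u} = \mathbf{x}/\tau$ and assume, after a permutation (which commutes with the projection because $\Delta^n_k$ is permutation invariant), that $u_1 \ge \cdots \ge u_n$.

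\textbf{Step 1 (equivalence).} Let $\mathbf{y}^\star = \Pi_{\Delta^n_k}(\mathbf{u})$. This point exists and is unique because $\Delta^n_k$ is compact and convex and $\|\mathbf{u}-\mathbf{y}\|^2$ is strictly convex. Let $M = \{\mathbf{y} : y_1 \ge \cdots \ge y_n\}$. The set $\Delta^n_k \cap M$ is closed, convex and nonempty, since it contains $(k/n)\mathbf{1}$. By Proposition~\ref{prop:order_preservation}, $\mathbf{y}^\star \in M$. So $\mathbf{y}^\star$ is feasible for the restricted problem. It also minimizes the objective over the larger set $\Delta^n_k$, hence over the smaller one. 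Uniqueness then gives that the restricted minimizer equals $\mathbf{y}^\star$. I would also note where ties $u_i = u_{i+1}$ enter: the clip formula gives $y_i^\star = y_{i+1}^\star$ in that case, so Proposition~\ref{prop:order_preservation} applies unchanged.

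\textbf{Step 2 (reduction to isotonic form).} On $M$, the box constraint reduces to the two endpoint conditions $y_1 \le 1$ and $y_n \ge 0$. I would dualize the single equality $\mathbf{1}^\top\mathbf{y} = k$ with a multiplier $\lambda$. For fixed $\lambda$, the problem becomes the isotonic regression of $\mathbf{u} - \tfrac{\lambda}{2}\mathbf{1}$ with its values clipped to $[0,1]$. Clipping commutes with isotonic regression, so for fixed $\lambda$ the answer is $\mathrm{clip}(\mathrm{iso}(\mathbf{u}) - \lambda/2, 0, 1)$. Because $\mathbf{u}$ is already sorted, $\mathrm{iso}(\mathbf{u}) = \mathbf{u}$, and we recover the clip formula from Proposition~\ref{prop:order_preservation}.

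\textbf{Step 3 (finding $\lambda$ and the cost).} The map $\lambda \mapsto \sum_i \mathrm{clip}(u_i - \lambda/2, 0, 1)$ is continuous, nonincreasing and piecewise linear. Its breakpoints are the $2n$ values $2u_i$ and $2(u_i - 1)$. A sweep over the sorted breakpoints finds the $\lambda$ giving sum $k$ in $\mathcal{O}(n)$ time. Equivalently, PAV run on the block structure (a saturated block at $1$, a free block, a block at $0$) pools adjacent violators in $\mathcal{O}(n)$ amortized time. The only superlinear cost is the initial sort, which is $\mathcal{O}(n\log n)$.

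\textbf{Main obstacle.} The delicate part is making the reduction to \emph{standard} isotonic regression rigorous while the equality constraint and the bounds are still present. I would handle this through the Lagrangian argument of Step 2, together with the known fact that bounded isotonic regression equals the clipped unbounded solution \cite{Best1990}. The phrase ``differentiable solution'' also needs care. I would read it as almost-everywhere differentiability, which follows from Proposition~\ref{prop:ae-diff}; more concretely, the map is locally affine wherever $u_i \ne \lambda/2$ and $u_i \ne 1 + \lambda/2$ for all $i$.
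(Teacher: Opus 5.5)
Your proof is correct. Step~1 is the paper's own argument: order preservation puts $\Pi_{\Delta^n_k}(\mathbf{u})$ inside the monotone cone, so it remains the unique minimizer of the restricted problem. You make it rigorous by checking nonemptiness and invoking uniqueness.

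Where you genuinely depart from the paper is the computational claim. The paper only asserts that the restricted problem is a standard isotonic regression solvable by PAV. It does not say how the equality $\mathbf{1}^\top\mathbf{y}=k$ is absorbed, and plain PAV does not handle a sum constraint. You instead dualize that single constraint. You observe that, for sorted $\mathbf{u}$, the inner minimizer over the monotone box is simply $\mathrm{clip}(\mathbf{u}-\tfrac{\lambda}{2}\mathbf{1},0,1)$. You then locate a suitable $\lambda$ by the intermediate value theorem and a linear sweep over the $2n$ breakpoints, which are already sorted.

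This gives a self-contained proof of the $\mathcal{O}(n\log n)$ bound in which the sort is visibly the only superlinear cost. It needs no strong-duality theorem, since any $\lambda$ giving sum $k$ certifies optimality of the clipped point. It also does not rely on uniqueness of $\lambda$. That uniqueness, asserted in Proposition~\ref{prop:order_preservation}, actually fails when the projection is binary: for $\mathbf{u}=(5,0)$ and $k=1$, every $\lambda\in[0,8]$ works, while $\mathbf{y}$ is still unique.

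Your route also makes explicit that, once the input is sorted, the chain constraint never binds. So the isotonic structure is nominal in this formulation, and the real work is the sort plus a threshold search. The paper's appeal to PAV is terser; its benefit is that it places the operator within existing isotonic-regression tooling rather than deriving the algorithm directly.
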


\subsection{The HyperSimplex loss}
We now compose our projection operator with the squared loss to define a smooth surrogate for binary classification. The squared loss provides high fidelity to the zero–one objective within $(0,1)$ but can be dominated by large-magnitude predictions. By composing it with the projection operator $\Pi_{\Delta^k_n}$, we constrain predictions to the hypersimplex, preventing any coordinate from overtaking the loss while preserving the discrete geometry of the solution.

Formally, for $\mathbf{x}, \mathbf{y} \in \mathbb{R}^n$, define
\[
\hat{\mathbf{y}} = \Pi_{\Delta^k_n}\!\left(\frac{\mathbf{x}}{\tau}\right),
\qquad 
L(\mathbf{x}, \mathbf{y}) = \tfrac{1}{2}\|\hat{\mathbf{y}} - \mathbf{y}\|_2^2,
\]
where $\tau > 0$ controls the smoothness of the relaxation.  
The gradient with respect to $\mathbf{x}$ follows from the chain rule:
\[
\nabla_{\mathbf{x}} L(\mathbf{x}, \mathbf{y}) 
= \frac{1}{\tau}\, J_{\Pi}\!\left(\tfrac{\mathbf{x}}{\tau}\right)(\hat{\mathbf{y}} - \mathbf{y}),
\]
where $J_{\Pi}$ denotes the Jacobian of the projection operator $\Pi_{\Delta^k_n}$.  
Let $A = \{i : 0 < \hat{y}_i < 1\}$ denote the active coordinates. On this set, the Jacobian acts as
\[
J_{\Pi} = I_{|A|} - \tfrac{1}{|A|}\mathbf{1}\mathbf{1}^\top,
\]
yielding the component-wise gradient
\[
(\nabla_{\mathbf{x}} L)_i =
\begin{cases}
\dfrac{1}{\tau}\!\left[(\hat{y}_i - y_i) - \dfrac{1}{|A|}\!\sum_{j \in A}(\hat{y}_j - y_j)\right], & i \in A, \\[6pt]
0, & i \notin A.
\end{cases}
\]
At boundary points where some $\hat{y}_i \in \{0,1\}$, the mapping is only directionally differentiable, and any subgradient consistent with this Jacobian form is valid.

\subsection{Extension to Multiclass Classification}

The formulation extends naturally to the multiclass setting.  
For each class $c \in \{1,\dots,C\}$ with logits $\mathbf{x}^{(c)} \in \mathbb{R}^n$, one-hot target $\mathbf{y}^{(c)}$, and temperature $\tau_c > 0$, we project onto the $(n,k_c)$-hypersimplex:
\[
\mathbf{p}^{(c)} = \Pi_{\Delta^n_{k_c}}\!\left(\tfrac{\mathbf{x}^{(c)}}{\tau_c}\right).
\]
The total loss is
\[
\mathcal{L}(X,Y) = \tfrac{1}{2}\sum_{c=1}^C
\bigl\|
\mathbf{p}^{(c)} - \mathbf{y}^{(c)}
\bigr\|_2^2,
\qquad
\nabla_{\mathbf{x}^{(c)}} \mathcal{L}
=
\tfrac{1}{\tau_c}
J_{\Pi}\!\left(\tfrac{\mathbf{x}^{(c)}}{\tau_c}\right)
(\mathbf{p}^{(c)} - \mathbf{y}^{(c)}).
\]
This provides a smooth, per-class projection framework that preserves hypersimplex structure while remaining fully differentiable. During the learning process each $k_c$ is set to match the expected number of positive responses for class $c$. 
\section{Experiments}
For our experiments, we evaluate the effectiveness of the proposed HyperSimplex loss in reducing the generalization gap compared to standard classification losses, including Cross-Entropy, Hinge, and Mean Squared Error (MSE, without projection). This setup also serves as an ablation study to isolate the contribution of our projection layer, verifying that incorporating geometric constraints on the output logits yields more consistent performance across batch sizes than the MSE objective alone.

\subsection{Datasets}
We conduct experiments on two standard image classification benchmarks: CIFAR-10~\cite{krizhevsky2009learning} and Fashion-MNIST~\cite{xiao2017fashion}. 
CIFAR-10 consists of 60{,}000 color images of size $32 \times 32$ pixels, split into 50{,}000 training and 10{,}000 test samples across 10 object categories. 
Fashion-MNIST contains 70{,}000 grayscale images of size $28 \times 28$ pixels, divided into 60{,}000 training and 10{,}000 test images from 10 clothing categories, serving as a more challenging replacement for the original MNIST dataset.

\subsection{Experimental Setup}

We employed a standard convolutional neural network (CNN) for multiclass image classification, consisting of four convolutional layers, each followed by batch normalization, max pooling, and ReLU activation. The final feature map is flattened and passed through two fully connected layers, with the last layer producing class logits. The datasets were preprocessed using random cropping, horizontal flipping, and per-channel normalization, and randomly split into training and test sets. All experiments were implemented in PyTorch~\cite{paszke2019pytorch} and executed on 32-core AMD Ryzen Threadripper PRO 5975WX CPU with 503\,GB of RAM and three NVIDIA RTX 6000 Ada Generation GPUs, each with 48\,GB of VRAM.

To ensure statistical robustness, each configuration was trained using five independent random seeds, varying both model initialization and data splits. We evaluated four loss functions—our proposed \textbf{HyperSimplex loss} and three widely used baselines: Cross-Entropy, Hinge, and Mean Squared Error (MSE)--across seven batch sizes (128, 256, 512, 1024, 2048, 4096 and 8192) on both CIFAR-10 and Fashion-MNIST. In total, this resulted in 280 training runs. For each configuration, we recorded the maximum test accuracy achieved per loss function and batch size, and assessed differences against the Cross-Entropy baseline using paired $t$-tests at the 10\% significance level. This experimental design provides a rigorous and statistically grounded comparison, isolating the contribution of the HyperSimplex formulation to generalization stability under varying batch regimes.

\subsection{Results}

\includegraphics[width=0.49\linewidth]{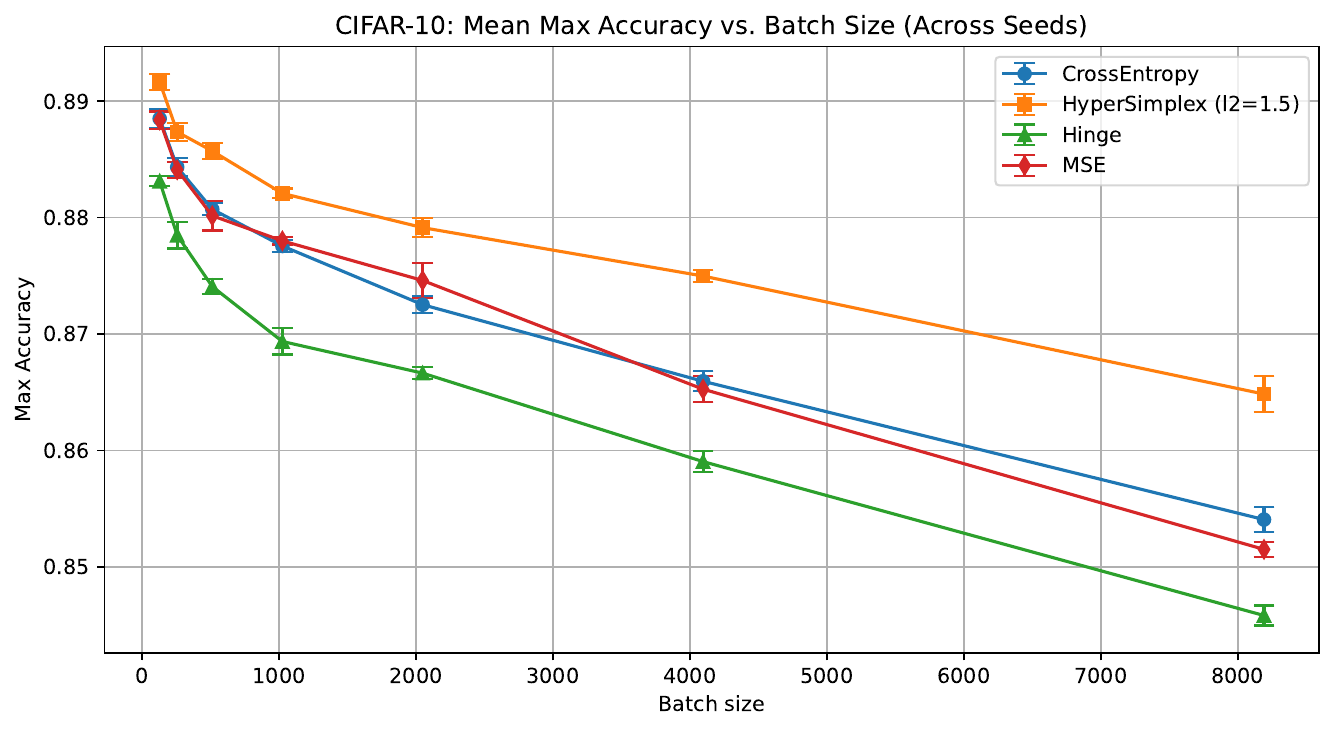}
\includegraphics[width=0.49\linewidth]{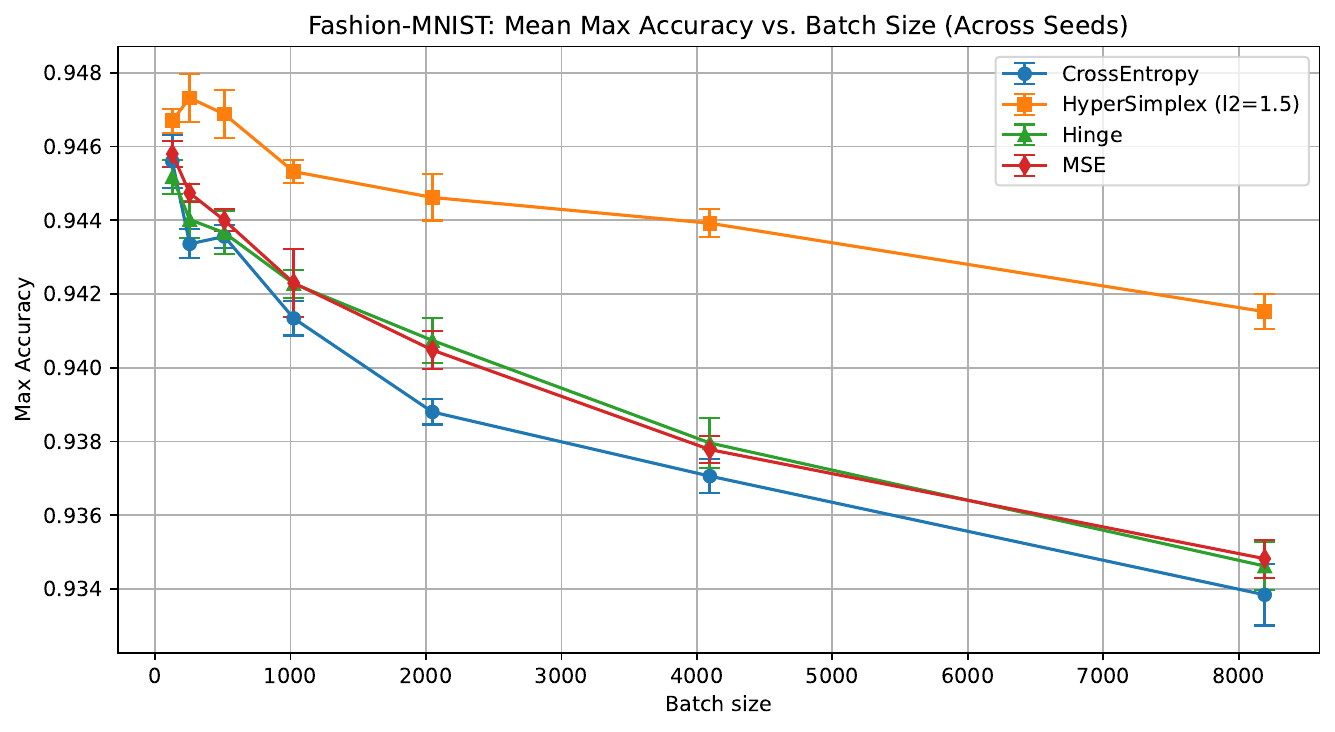}

For CIFAR-10, all seven configurations report positive mean accuracy differences, and all ($100\%$) show statistically significant improvements at the 10\% level ($p<0.1$).  
For Fashion-MNIST, six of seven configurations ($\approx86\%$) also achieve significance, with only the smallest batch size ($128$) falling above the 10\% threshold.  
Across both datasets, therefore, 13 of 14 total comparisons ($\approx93\%$) demonstrate statistically significant gains, indicating that the proposed loss systematically outperforms cross-entropy across a wide range of training conditions.

These findings confirm that the HyperSimplex loss maintains accuracy stability at smaller batch sizes while mitigating the degradation observed in cross-entropy as batch size increases. This supports its effectiveness as a smooth, geometry-consistent surrogate that enhances generalization and robustness in large-batch training regimes.

\begin{table*}[htbp]
\centering
\caption{Batch-wise Comparison of Cross-Entropy (CE) vs. HyperSimplex (HS) Losses on CIFAR-10 and Fashion-MNIST. The \textbf{highlighted} scores are statistically significant at a 10\% level of significance.} 
\label{tab:batchwise_ce_vs_hs}
\resizebox{\textwidth}{!}{%
\begin{tabular}{c c c c c c  c c c c c}
\toprule
\multicolumn{6}{c}{\textbf{CIFAR-10}} & \multicolumn{5}{c}{\textbf{FashionMNIST}} \\
\cmidrule(lr){1-6}\cmidrule(lr){7-11}
\textbf{Batch} & \textbf{CE} & \textbf{HS (ours)} & \textbf{$\Delta$} & \textbf{t-stat} & \textbf{p-val}
& \textbf{CE} & \textbf{HS (ours)} & \textbf{$\Delta$} & \textbf{t-stat} & \textbf{p-val} \\
\midrule
128  & 0.8885 & \textbf{0.8917} & 0.0032 &  2.29 & \textbf{0.084} 
     & 0.9456 & 0.9467 & 0.0011 &  1.31 & 0.262 \\
256  & 0.8843 & \textbf{0.8874} & 0.0030 &  3.47 & \textbf{0.026} 
     & 0.9434 & \textbf{0.9473} & 0.0040 &  4.34 & \textbf{0.012} \\
512  & 0.8807 & \textbf{0.8857} & 0.0050 &  6.08 & \textbf{<0.01} 
     & 0.9436 & \textbf{0.9469} & 0.0033 &  5.51 & \textbf{<0.01} \\
1024 & 0.8776 & \textbf{0.8821} & 0.0045 &  9.78 & \textbf{<0.01} 
     & 0.9413 & \textbf{0.9453} & 0.0040 &  6.67 & \textbf{<0.01} \\
2048 & 0.8725 & \textbf{0.8791} & 0.0066 &  7.81 & \textbf{<0.01} 
     & 0.9388 & \textbf{0.9446} & 0.0058 & 12.97 & \textbf{<0.01} \\
4096 & 0.8659 & \textbf{0.8750} & 0.0090 & 13.14 & \textbf{<0.01} 
     & 0.9371 & \textbf{0.9439} & 0.0069 &  9.68 & \textbf{<0.01} \\
8192 & 0.8541 & \textbf{0.8648} & 0.0108 &  8.52 & \textbf{<0.01} 
     & 0.9338 & \textbf{0.9415} & 0.0077 & 14.99 & \textbf{<0.01} \\
\bottomrule
\end{tabular}
}
\end{table*}

\subsection{Other Experiments: Cross-domain Validation }
Additional GBRT results on tabular datasets for classification are reported in Appendix~\ref{app:A},
showing that the HyperSimplex loss also improves out-of-sample generalization beyond the neural settings studied
in the main text.
\section{Conclusion}
We introduced the soft-binary-argmax@k, a differentiable projection onto the
interior of the $(n,k)$-dimensional hypersimplex, and established its key
properties—differentiability, order preservation, and efficient GPU computation.
We showed how this operator integrates naturally into end-to-end learning
systems, and used it to construct a surrogate to the zero--one loss for binary and
multiclass settings, with statistically significant reductions in the generalization gap.

Owing to its close alignment with the true zero--one objective, the proposed
HyperSimplex loss improves generalization under large-batch training.
Additional cross-domain evaluations on tabular data further suggest that the benefits of the
projection extend beyond neural models. Future work will explore applications to
contrastive learning objectives and structured prediction.
\appendix
\section{Appendix: Cross-Domain Tabular Results}
\label{app:A}
\begin{table}[h!]\centering\scriptsize
\begin{tabular}{lcccccccc}
\toprule
Dataset & Higgs & Flight & KDD10 & KDD12 & Criteo & Avazu & KKBox & MovieLens \\
\midrule
Cross Entropy      & 0.823 & 0.773 & 0.826 & 0.724 & 0.774 & 0.738 & 0.777 & 0.827 \\
HyperSimplex      & \textbf{0.846} & \textbf{0.778} & \textbf{0.849} & \textbf{0.729} &
          \textbf{0.796} & \textbf{0.741} & \textbf{0.797} & \textbf{0.828} \\
\bottomrule
\end{tabular}
\end{table}

%
%

\begin{credits}

\end{credits}
%
%
%
%

\bibliographystyle{splncs04}
\bibliography{references}

@article{DCL,
  author       = {Akshay Agrawal and
                  Brandon Amos and
                  Shane T. Barratt and
                  Stephen P. Boyd and
                  Steven Diamond and
                  J. Zico Kolter},
  title        = {Differentiable Convex Optimization Layers},
  journal      = {CoRR},
  volume       = {abs/1910.12430},
  year         = {2019},
  url          = {http://arxiv.org/abs/1910.12430},
  eprinttype    = {arXiv},
  eprint       = {1910.12430},
  bibsource    = {dblp computer science bibliography, https://dblp.org}
}

@article{sparsemax,
  author       = {Andr{\'{e}} F. T. Martins and
                  Ram{\'{o}}n Fernandez Astudillo},
  title        = {From Softmax to Sparsemax: {A} Sparse Model of Attention and Multi-Label
                  Classification},
  journal      = {CoRR},
  volume       = {abs/1602.02068},
  year         = {2016},
  url          = {http://arxiv.org/abs/1602.02068},
  eprinttype    = {arXiv},
  eprint       = {1602.02068},
  bibsource    = {dblp computer science bibliography, https://dblp.org}
}

@inproceedings{martins-kreutzer-2017-learning,
    title = "Learning What`s Easy: Fully Differentiable Neural Easy-First Taggers",
    author = "Martins, Andr{\'e} F. T.  and
      Kreutzer, Julia",
    editor = "Palmer, Martha  and
      Hwa, Rebecca  and
      Riedel, Sebastian",
    booktitle = "Proceedings of the 2017 Conference on Empirical Methods in Natural Language Processing",
    month = sep,
    year = "2017",
    address = "Copenhagen, Denmark",
    publisher = "Association for Computational Linguistics",
    url = "https://aclanthology.org/D17-1036/",
    doi = "10.18653/v1/D17-1036",
    pages = "349--362"
}

@article{Best1990,
  author    = {M. J. Best and N. Chakravarti},
  title     = {Active set algorithms for isotonic regression: a unifying framework},
  journal   = {Mathematical Programming},
  volume    = {47},
  number    = {1-3},
  pages     = {425--439},
  year      = {1990},
  publisher = {Springer},
  doi       = {10.1007/BF01580880}
}

@article{strict_scoring,
  title={Strictly proper scoring rules, prediction, and estimation},
  author={Gneiting, Tilmann and Raftery, Adrian E},
  journal={Journal of the American Statistical Association},
  volume={102},
  number={477},
  pages={359--378},
  year={2007},
  publisher={Taylor \& Francis}
}

@article{huber1964robust,
  title={Robust estimation of a location parameter},
  author={Huber, Peter J},
  journal={The Annals of Mathematical Statistics},
  volume={35},
  number={1},
  pages={73--101},
  year={1964},
  publisher={Institute of Mathematical Statistics}
}

@article{Differentiating_Parameterized,
  author       = {Stephen Gould and
                  Basura Fernando and
                  Anoop Cherian and
                  Peter Anderson and
                  Rodrigo Santa Cruz and
                  Edison Guo},
  title        = {On Differentiating Parameterized Argmin and Argmax Problems with Application
                  to Bi-level Optimization},
  journal      = {CoRR},
  volume       = {abs/1607.05447},
  year         = {2016},
  url          = {http://arxiv.org/abs/1607.05447},
  eprinttype    = {arXiv},
  eprint       = {1607.05447},
  bibsource    = {dblp computer science bibliography, https://dblp.org}
}

@misc{amos2019limitedmultilabelprojectionlayer,
      title={The Limited Multi-Label Projection Layer}, 
      author={Brandon Amos and Vladlen Koltun and J. Zico Kolter},
      year={2019},
      eprint={1906.08707},
      archivePrefix={arXiv},
      primaryClass={cs.LG},
      url={https://arxiv.org/abs/1906.08707}, 
}

@INPROCEEDINGS{LTR,
  author={Gomez, Camilo and Wang, Pengyang and Fu, Yanjie},
  booktitle={2023 IEEE International Conference on Data Mining (ICDM)}, 
  title={Metric-agnostic Learning-to-Rank via Boosting and Rank Approximation}, 
  year={2023},
  volume={},
  number={},
  pages={1043-1048},
  doi={10.1109/ICDM58522.2023.00121}}

@inproceedings{fast_soft_ranking,
  title={Fast differentiable sorting and ranking},
  author={Blondel, Mathieu and Teboul, Olivier and Berthet, Quentin and Djolonga, Josip},
  booktitle={International Conference on Machine Learning},
  pages={950--959},
  year={2020},
  organization={PMLR}
}

@phdthesis{gomez2025projection,
  author = {Gomez, Camilo},
  title  = {Fast Differentiable Projection Layers onto High-Dimensional Polytopes for Large-Scale Predictive Modeling},
  school = {University of Central Florida},
  year   = {2025},
  url    = {https://stars.library.ucf.edu/etd2024/450}
}

@misc{blondel2024elementsdifferentiableprogramming,
      title={The Elements of Differentiable Programming}, 
      author={Mathieu Blondel and Vincent Roulet},
      year={2024},
      eprint={2403.14606},
      archivePrefix={arXiv},
      primaryClass={cs.LG},
      url={https://arxiv.org/abs/2403.14606}, 
}

@misc{oyedotun2022newperspectiveunderstandinggeneralization,
      title={A New Perspective for Understanding Generalization Gap of Deep Neural Networks Trained with Large Batch Sizes}, 
      author={Oyebade K. Oyedotun and Konstantinos Papadopoulos and Djamila Aouada},
      year={2022},
      eprint={2210.12184},
      archivePrefix={arXiv},
      primaryClass={cs.LG},
      url={https://arxiv.org/abs/2210.12184}, 
}

@article{bartlett2006convexity,
  title={Convexity, classification, and risk bounds},
  author={Bartlett, Peter L and Jordan, Michael I and McAuliffe, Jon D},
  journal={Journal of the American Statistical Association},
  volume={101},
  number={473},
  pages={138--156},
  year={2006},
  publisher={Taylor & Francis}
}

@inproceedings{paszke2019pytorch,
  title={PyTorch: An Imperative Style, High-Performance Deep Learning Library},
  author={Paszke, Adam and Gross, Sam and Massa, Francisco and Lerer, Adam and Bradbury, James and Chanan, Gregory and Killeen, Trevor and Lin, Zeming and Gimelshein, Natalia and Antiga, Luca and Desmaison, Alban and K{\"o}pf, Andreas and Yang, Edward and DeVito, Zachary and Raison, Martin and Tejani, Alykhan and Chilamkurthy, Sasank and Steiner, Benoit and Fang, Lu and Bai, Junjie and Chintala, Soumith},
  booktitle={Advances in Neural Information Processing Systems},
  pages={8024--8035},
  year={2019}
}

@inproceedings{keskar2016large,
  author    = {Nitish Shirish Keskar and Dheevatsa Mudigere and Jorge Nocedal and Mikhail Smelyanskiy and Ping Tak Peter Tang},
  title     = {On Large-Batch Training for Deep Learning: Generalization Gap and Sharp Minima},
  booktitle = {Proceedings of the 5th International Conference on Learning Representations (ICLR)},
  year      = {2017},
  url       = {https://openreview.net/forum?id=H1oyRlYgg}
}

@inproceedings{hoffer2017train,
  author    = {Elad Hoffer and Itay Hubara and Daniel Soudry},
  title     = {Train Longer, Generalize Better: Closing the Generalization Gap in Large Batch Training of Neural Networks},
  booktitle = {Proceedings of the 31st Conference on Neural Information Processing Systems (NeurIPS)},
  year      = {2017},
  pages     = {1731--1741}
}

@inproceedings{izmailov2018averaging,
  author    = {Pavel Izmailov and Dmitrii Podoprikhin and Timur Garipov and Dmitry Vetrov and Andrew Gordon Wilson},
  title     = {Averaging Weights Leads to Wider Optima and Better Generalization},
  booktitle = {Proceedings of the 34th Conference on Uncertainty in Artificial Intelligence (UAI)},
  year      = {2018},
  pages     = {876--885},
  url       = {https://arxiv.org/abs/1803.05407}
}

@techreport{krizhevsky2009learning,
  title={Learning Multiple Layers of Features from Tiny Images},
  author={Krizhevsky, Alex},
  year={2009},
  institution={University of Toronto},
  number={TR-2009},
  url={https://www.cs.toronto.edu/~kriz/learning-features-2009-TR.pdf}
}

@article{xiao2017fashion,
  title={Fashion-MNIST: A Novel Image Dataset for Benchmarking Machine Learning Algorithms},
  author={Xiao, Han and Rasul, Kashif and Vollgraf, Roland},
  journal={arXiv preprint arXiv:1708.07747},
  year={2017},
  url={https://arxiv.org/abs/1708.07747}
}

@article{deloera1995hypersimplex,
  title={Gr{\"o}bner bases and triangulations of the second hypersimplex},
  author={De Loera, Jes{\'u}s A. and Sturmfels, Bernd and Thomas, Rekha R.},
  journal={Combinatorica},
  volume={15},
  number={3},
  pages={409--424},
  year={1995},
  publisher={Springer},
  doi={10.1007/BF01299745}
}
\end{document}